\def\vspecdots{\vbox{\baselineskip=2pt \lineskiplimit=0pt
    \kern2pt \hbox{.}\hbox{.}\hbox{.}}}
\newcommand{\until}[1]{\{1,\dots, #1\}}
\newcommand{\setdef}[2]{\{#1 \; | \; #2\}}
\newcommand{\diag}[1]{\ensuremath{\operatorname{diag}(#1)}}
\newcommand{\Vect}[1]{\ensuremath{\operatorname{vec}(#1)}}
\newcommand{\kron}{\operatorname{\otimes}}
\newcommand{\fmt}{m}
\newcommand{\FMT}{M}
\newcommand{\MFMT}{\mathcal{M}}
\newcommand\oprocendsymbol{\hbox{$\square$}}
\newcommand\oprocend{\relax\ifmmode\else\unskip\hfill\fi\oprocendsymbol}
\def \bs {\boldsymbol}
\def \mc {\mathcal}
\newcommand{\G}{\mc G}
\newcommand{\V}{V}
\newcommand{\EE}{\mathcal{E}}
\newcolumntype{d}[1]{D{.}{.}{#1}}
\DeclareSymbolFont{bbold}{U}{bbold}{m}{n}
\DeclareSymbolFontAlphabet{\mathbbold}{bbold}
\newcommand{\vect}[1]{\mathbbold{#1}}
\newcommand{\vectorones}[1][]{\vect{1}_{#1}}
\newtheorem{theorem}{Theorem}
\newtheorem{corollary}{Corollary}
\newtheorem{lemma}{Lemma}
\newtheorem{remark}{Remark}
\newtheorem{problem}{Problem}
\newtheorem{definition}{Definition}
\newenvironment{proof}[1][Proof]{\begin{trivlist}
\item[\hskip \labelsep {\bfseries #1}]}{\end{trivlist}}
\renewcommand{\theenumi}{(\roman{enumi}}
\title{
Robotic Surveillance Based on the Meeting Time of Random Walks
  \thanks{This work has been supported in part by Air Force Office of Scientific Research award FA9550-15-1-0138.}}
\author{Xiaoming Duan, Mishel George, Rushabh Patel, and Francesco Bullo
\thanks{Xiaoming Duan and Francesco Bullo are with the
    Mechanical Engineering Department and the Center of Control, Dynamical
    Systems and Computation, UC Santa Barbara, CA 93106-5070, USA. {\tt\small\{xiaomingduan,bullo\}@ucsb.edu}}
\thanks{Mishel George is with DoorDash, San Francisco, CA, USA. {\tt\small mishelgeorge@gmail.com}}
\thanks{Rushabh Patel is with Systems Technology, Inc., Hawthorne, CA, USA. {\tt\small patelrush@gmail.com}}}
\begin{document}
\graphicspath{{./figures/}}
\maketitle
\renewcommand{\theenumi}{(\roman{enumi})}%

\begin{abstract}
  This paper analyzes the \emph{meeting time} between a pair of pursuer and
  evader performing random walks on digraphs. The existing bounds on the
  meeting time usually work only for certain classes of walks and cannot be
  used to formulate optimization problems and design robotic
  strategies. First, by analyzing multiple random walks on a common graph as a
  single random walk on the Kronecker product graph, we provide the first
  closed-form expression for the expected meeting time in terms of the
  transition matrices of the moving agents.  This novel expression leads to
  necessary and sufficient conditions for the meeting time to be finite and
  to insightful graph-theoretic interpretations.  Second, based on the
  closed-form expression, we setup and study the minimization problem for
  the expected capture time for a pursuer/evader pair. We report
  theoretical and numerical results on basic case studies to show the
  effectiveness of the design.
\end{abstract}


\section{Introduction}\label{sec:Intro}
\subsection{Problem description and motivation}
In this paper, we examine the meeting time between two moving agents modeled by discrete-time Markov chains. This
problem is motivated by a pursuer trying to intercept a moving evader. The
meeting time, in the context of this paper, describes the average time till
a first encounter occurs between the pursuer and the evader given initial
positions of the pursuer and the evader. This notion of two adversarial
mobile agents wherein one of the agents is trying to intercept the other
appears under several names: pursuit-evasion
games~\cite{RV-OS-HJK-DHS-SS:02}, predator-prey
interactions~\cite{CC-AF-TR:09}, cops and robbers games~\cite{MA-MF:84,
  AB-PG-GH-JK:09} and princess-monster games~\cite{SA-RF-RL-GO:08}. Our
primary motivation is the design of stochastic surveillance strategies for
quickest detection of the mobile intruder. Single and multi-agent
surveillance strategies appear in environmental
monitoring~\cite{VS-FP-FB:11za}, minimizing emergency vehicle response
times~\cite{THB-JSK:02}, traffic routing and border
patrol~\cite{MP-CD-VD-AK-YW:03}. More broadly random walks on
networks appear in many areas of research: they are used to describe
effective resistance in electrical networks~\cite{PGD-JLS:84}, for
link-prediction and information propagation in social
networks~\cite{LB-JL:11}, and in designing search algorithms on
networks~\cite{MSS-DT-SB:14}. Aside from our proposed application to
stochastic surveillance, the meeting time has direct applications to
information flow in distributed networks~\cite{SRE-TB:16},
self-stabilization of tokens~\cite{AI-MJ:90} and measuring similarity of
objects~\cite{GJ-JW:02}.
\subsection{Literature review}
Early interest in meeting times was motivated by applications to self-stabilizing token management schemes~\cite{PT-PW:91}. In a token management scheme, only one of the many processors on a distributed network is enabled to change state or perform a particular task, and this processor is said to possess the token. If two tokens meet then they collapse into a single token. Israeli and Jalfon suggest a scheme in which the token is passed randomly to a neighbor~\cite{AI-MJ:90}. In a general connected undirected graph they were able to obtain an exponential bound for the meeting time of two tokens in terms of the maximum degree and the diameter of the graph. Coppersmith \emph{et al}.~\cite{DC-PR-PW:93} improved the bound to be polynomial in the number of nodes by bounding the meeting time in terms of the pairwise hitting time from the starting nodes of the tokens to hidden vertices.
Bshouty \emph{et al}.~\cite{NHB-LH-JWG:99} obtain a bound on the meeting time of several such tokens in terms of the meeting time of two tokens. Bounds for meeting times of two identical independent continuous-time reversible Markov chains in terms of the pairwise hitting times of the chain are mentioned in~\cite{DJA:91}. 

Several metrics have been used to describe single and multiple random walks of graphs. 
 One closely related metric is the hitting time which is the time taken by a single random walker to travel between nodes of a graph. The hitting time of a finite irreducible Markov chain first appeared in~\cite{JGK-JLS:76}. 
Several bounds have been obtained and many closed-form formulas exist to compute the hitting time for various graph topologies~\cite{JJH:13}.  The authors in~\cite{RP-AC-FB:14k} obtain a closed-form solution for the hitting time of multiple random walkers.
Another related notion is the coalescence time of multiple random walkers widely studied in the context of voter models~\cite{CC-RE-HO-TR:13}. Two random walks coalesce into one when they share the same node. Bounds for the coalescence time in terms of the worst case pairwise hitting times are discussed in~\cite{DA-JAF:02}. More recently, Cooper \emph{et al}. bounded the coalescence time using the second largest eigenvalue of the transition matrix~\cite{CC-RE-HO-TR:13}.

Stochastic vehicle routing strategies have the desirable property that an intruder cannot predictably plan a path to avoid surveillance agents. The authors in~\cite{RP-PA-FB:14b,MG-SJ-FB:17b,XD-MG-FB:17o} use Markov chains to design surveillance strategies. A novel convex optimization formulation is used to design strategies with minimum mean hitting time in~\cite{RP-PA-FB:14b}. In~\cite{PA-FB:15e} the mean hitting time in conjunction with multiple parallel CUSUM algorithms at various nodes of interest in the graph are used to describe a policy which ensures quickest average time to the detection of anomalies. In the strategies mentioned in these works the intruder/anomaly is assumed to be stationary. The policies for surveillance derived in this paper are for mobile intruders modeled by Markov chains.

\subsection{Contributions}
Given the above, there are several contributions in this paper. First, we
provide a set of necessary and sufficient conditions which characterize
when the meeting times between a single pursuer and a single evader is
finite for arbitrary Markov chains. To the best of our knowledge the bounds
in the literature were obtained for meeting times between ergodic Markov
chains where the meeting times are guaranteed to be finite. We extend the
notion to generic transition matrices as opposed to equal-neighbor models,
and we discuss when the meeting times are finite based on the existence of
walks of equal length to common nodes.  Second, we provide a closed-form
solution to the meeting time of two independent Markov chains by utilizing
the Kronecker product of the transition matrices. Third, we use this
closed-form expression to perform theoretical and simulation studies and
design fast Markov chain strategies for the pursuer to capture, in minimum
expected time, different moving evaders in different prototypical
graphs. In particular, in ring and complete graphs, we rigourously show a few qualitative features of the design. For example, being fast for the pursuer is not always necessary and the mean capture time may be indifferent to the pursuer's strategy for certain evaders.




To the best of our knowledge, this paper provides the first closed-form solutions for the computation of the meeting time between two agents moving on a graph according to discrete-time Markov chains. Two closely related references are as follows: first, a system of equations for computing meeting times for independent identical random walks on graphs with irreducible transition matrices, where the transition matrices are limited to equal-neighbor weights, were obtained using Laplace transform techniques in~\cite{TO:15}. Second, Kronecker products and vectorization techniques have been used to compute the Simrank of information networks which has interpretations in terms of meeting times~\cite{CL-JH-GH-XJ-YS-YY-TW:10}. In contrast, we consider absolutely generic transition matrices which need not be identical.

\subsection{Organization}
This paper is organized as follows. In Section~\ref{sec:Notation} we introduce notation that is used throughout the paper and review useful concepts. In Section~\ref{sec:Single_pursuer_evader} we introduce our formulation for the meeting times of pairs of Markov chains, and also define sets of pairs of matrices for which finite meeting times exist. In Section~\ref{sec:simulation} we present simulation results on fast Markov chain strategies for the mobile pursuer. Finally, we conclude the paper in Section~\ref{sec:conclusion}.

\section{Notation and Preliminaries} \label{sec:Notation}
In this section, we provide an
overview of Markov chains and introduce notation that will be used throughout the paper to deal with vectors, matrices, and the Kronecker product.
\subsection{Markov chains} \label{subsec:MarkChains}
A finite-state \emph{Markov chain} is a sequence of random variables taking values in the \emph{finite} set $\until{n}$ with the Markov property, i.e., the future state depends only on the current state.

Let $X_t \in \until{n}$ denote the location of a random walker at time $t \in \{0,1,$ $2,\dots\}$. A discrete-time Markov chain is \emph{time-homogeneous} if $\mathbb{P}[X_{t+1}=j\,|\,X_t =  i]=p_{i,j}$ for all $i,j\in\until{n}$ and $t \geq0$, where $P =[p_{i,j}] \in \mathbb{R}^{n  \times n}$ is the \emph{transition matrix} of the Markov chain. By definition, each transition matrix $P$ is row-stochastic,  i.e., $P \vectorones[n] = \vectorones[n]$, where $\vectorones[n]$ is a vector of $1$'s in dimension $n$. The period of a state $i$ is defined as the greatest common divisor of all $t$ in $\{t\geq 1\,|\,\mathbb{P}[X_t=i\,|\,X_0=i]\neq 0\}$. A state whose period is one is referred to as an \emph{aperiodic} state, and a Markov chain is aperiodic if all of the states are aperiodic. All states in a \emph{communicating class} (defined below) share the same period. For more details on Markov chains refer \cite{JGK-JLS:76}.

For two states $i$ and $j$ of a Markov chain, state $i$ \emph{communicates} with $j$ if  $\mathbb{P}[X_t=j\,|\,X_0=i] \neq 0$ for some $t>0$. A subset of states $X \subset \until{n}$ forms a \emph{communicating class} if for every state $i,j \in X$ the states communicate with each other, i.e., $\mathbb{P}[X_{t}=j\,|\,X_0=i]\neq 0$ and $\mathbb{P}[X_{t^{'}}=i\,|\,X_0=j]\neq 0$ for some $t,t'\geq0$. An \emph{absorbing class} $A$ of a Markov chain is a communicating class such that the probability of escaping the set is zero, i.e., $\mathbb{P}[X_{t}=j\,|\,X_0=i]= 0$ for all $t>0$ for all $i \in A, j\notin A$. A communicating class that is not absorbing is a \emph{transient class}. In general, a Markov chain will have multiple absorbing and transient classes. If a Markov chain has only a single absorbing class then it is referred to as a \emph{single absorbing Markov chain}.

If a discrete-time Markov chain with transition matrix $P$ is single absorbing, then a unique stationary distribution $\bs \pi\in\mathbb{R}^n_{\geq0}$ exists, which satisfies $\sum_{i=1}^n \bs \pi_i = 1$ and $\bs \pi^\top P = \bs \pi^\top$. A Markov chain is \emph{irreducible} if the absorbing class is the entire set of states $\until{n}$. A discrete-time Markov chain is said to be \emph{ergodic} if it is irreducible and aperiodic.

\subsection{Kronecher product} \label{subsec:KronProps}
For two matrices $A \in \mathbb{R}^{n \times m}$ and $B \in \mathbb{R}^{q \times r}$, the Kronecker product $A\kron B$ is an $nq \times mr$ matrix given by
\begin{align*}
  A\kron B = \left[
  \begin{array}{ccc}
    a_{1,1} B & \dots & a_{1,m} B \\
    \vdots& \ddots  &\vdots \\
    a_{n,1} B &  \ddots& a_{n,m}B
  \end{array} \right].
\end{align*}
%
A few
properties of the Kronecker product and vectorization of matrices are summarized in the following lemma.
\begin{lemma}[Kronecher product and vectorization identities] \label{lem:product-identities}
  Given matrices $A,B,C$ and $D$ of appropriate dimensions, the following identities hold:
  \begin{enumerate}
  \item $(A \kron B) (C \kron D) = (A  C) \kron (B  D)$,
  \item $(B^\top \kron A) \Vect{C}= \Vect{ACB}$.
  \end{enumerate}
\end{lemma}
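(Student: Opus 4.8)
The plan is to establish the two identities in sequence, deriving (ii) from (i). Both are classical, so the work is careful bookkeeping with the block structure of the Kronecker product and with the column-stacking convention of $\Vect{\cdot}$, rather than any conceptual difficulty.

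For identity (i) I would argue directly at the level of blocks. Writing $A \kron B$ as the block matrix whose $(i,k)$ block is $a_{i,k} B$ and $C \kron D$ as the block matrix whose $(k,j)$ block is $c_{k,j} D$, the $(i,j)$ block of the product $(A \kron B)(C \kron D)$ is obtained by block multiplication as $\sum_k (a_{i,k} B)(c_{k,j} D) = \bigl(\sum_k a_{i,k} c_{k,j}\bigr) BD$. The scalar sum is exactly the $(i,j)$ entry of $AC$, so the $(i,j)$ block equals $(AC)_{i,j}\, BD$, which is precisely the $(i,j)$ block of $(AC) \kron (BD)$. Checking that the dimensions are compatible throughout, so that every block product is defined, completes this part.

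For identity (ii) I would reduce to the rank-one case and invoke (i). The key auxiliary fact is that for column vectors $u$ and $v$ one has $\Vect{u v^\top} = v \kron u$, which is immediate from the column-stacking definition: the $l$-th column of $u v^\top$ is $v_l u$, and stacking these columns produces $v \kron u$. Now decompose $C = \sum_{k,l} c_{k,l}\, e_k e_l^\top$ into standard-basis rank-one terms, where $e_k, e_l$ denote standard basis vectors of the appropriate dimensions. For a single term, $A (e_k e_l^\top) B = (A e_k)(e_l^\top B) = (A e_k)(B^\top e_l)^\top$, so applying the auxiliary fact gives $\Vect{A (e_k e_l^\top) B} = (B^\top e_l) \kron (A e_k)$. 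By identity (i) this equals $(B^\top \kron A)(e_l \kron e_k)$, and since $e_l \kron e_k = \Vect{e_k e_l^\top}$, the single-term identity reads $\Vect{A(e_k e_l^\top) B} = (B^\top \kron A)\Vect{e_k e_l^\top}$.

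Finally I would extend from rank-one terms to general $C$ by linearity: both $\Vect{A \,\cdot\, B}$ and multiplication by $B^\top \kron A$ are linear in their matrix argument, so summing the single-term identity against the coefficients $c_{k,l}$ yields $\Vect{ACB} = (B^\top \kron A)\Vect{C}$. I expect the only real care needed is to fix and consistently use the column-stacking convention for $\Vect{\cdot}$, together with the induced ordering of the Kronecker factors, since an inconsistent convention is the usual source of a spurious transpose or a swapped factor order in this identity.
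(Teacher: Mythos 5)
Your proof is correct and complete. Note, however, that the paper itself states Lemma~\ref{lem:product-identities} without any proof, presenting both identities as classical background facts about the Kronecker product and vectorization; there is thus no in-paper argument to compare yours against. Your route---block multiplication for the mixed-product identity (i), then reduction of (ii) to rank-one terms via the auxiliary fact $\Vect{uv^\top} = v \kron u$ followed by linearity---is a standard and fully rigorous way to supply the missing proof, and your emphasis on fixing the column-stacking convention is indeed where all the care lies: with a row-stacking convention one would instead obtain $(A \kron B^\top)$, which is precisely the spurious-transpose error you flag.
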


\subsection{Markov chains on graphs}
In this paper, we consider a strongly connected digraph $\G = (\V,\EE)$ with the node set $V =\until{n}$ and the edge set $\EE \subset \V \times \V$. The transition matrix $P=[p_{i,j}]$ of a Markov chain on $\G$ satisfies that $p_{i,j}\geq0$ if $(i,j) \in \EE$ and $p_{i,j}=0$ otherwise. 
There exists a \emph{walk} of \emph{length} $\ell$ from node $i_1$ to node $i_{\ell+1}$ for $P$ if there exists a sequence of nodes $i_1,i_2,\dots,i_{\ell+1}$ such that $p_{i_k,i_{k+1}}>0$ for $1\leq k \leq \ell$.



The following lemma shall be used later.
\begin{lemma} [Convergence of substochastic matrices {\cite[Lemma~2.2]{RP-AC-FB:14k}}]  \label{lem:substochastic}
  Let $P \in \mathbb{R}^{n \times n}$ be a row-substochastic matrix, then $P$ has spectral radius less than $1$ if and only if for every node with row-sum $1$ there exists a walk to a node with row-sum less than $1$.
\end{lemma}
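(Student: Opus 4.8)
The plan is to prove both implications by tracking the row-sum vector and its iterates. Write $\bs r = P\vectorones[n]$ for the vector of row sums, so that row-substochasticity reads $\vectorzeros[n] \le \bs r \le \vectorones[n]$, and set $\bs r^{(k)} = P^k \vectorones[n]$, whose $i$-th entry is precisely the $i$-th row sum of $P^k$. Since $P \ge 0$ and $\bs r^{(1)} = \bs r \le \vectorones[n]$, an easy induction gives both $\bs r^{(k)} \le \vectorones[n]$ and the monotonicity $\bs r^{(k+1)} = P \bs r^{(k)} \le P \bs r^{(k-1)} = \bs r^{(k)}$ entrywise, so each coordinate of $\bs r^{(k)}$ is nonincreasing in $k$. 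The guiding principle is that, writing $\rho(\cdot)$ for the spectral radius, one has $\rho(P) \le \norm{P^m}_\infty^{1/m}$ for every $m$, and $\norm{P^m}_\infty = \max_i \bs r^{(m)}_i$ since $P^m \ge 0$; hence it suffices, for the ``if'' direction, to exhibit a single power $m$ with every coordinate of $\bs r^{(m)}$ strictly below $1$.

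To produce such an $m$ under the walk hypothesis, I would show that every coordinate of $\bs r^{(k)}$ eventually drops below $1$. If $r_i < 1$ this holds already at $k = 1$. If $r_i = 1$, I would use the telescoping identity
\begin{align*}
  \bs r^{(k)}_i - \bs r^{(k+1)}_i = \big(P^k(\vectorones[n] - \bs r)\big)_i = \sum_{j} (P^k)_{i,j}\,(1 - r_j),
\end{align*}
all of whose summands are nonnegative. By hypothesis there is a positive-weight walk $i = i_0, i_1, \dots, i_\ell = j^\ast$ to some node $j^\ast$ with $r_{j^\ast} < 1$, and multiplying the weights along it shows $(P^\ell)_{i,j^\ast} \ge \prod_{s=0}^{\ell-1} p_{i_s,i_{s+1}} > 0$. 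The identity at $k = \ell$ then forces a strictly positive drop, so $\bs r^{(\ell+1)}_i \le 1 - (P^\ell)_{i,j^\ast}(1 - r_{j^\ast}) < 1$. Letting $m$ be the largest of these finitely many exponents and invoking the coordinatewise monotonicity gives $\bs r^{(m)} < \vectorones[n]$, hence $\norm{P^m}_\infty < 1$ and $\rho(P) < 1$.

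For the converse I would argue by contraposition. Suppose some node $i$ with $r_i = 1$ admits no positive-weight walk to any node of row sum less than $1$, and let $S \subseteq \until{n}$ be the set of nodes reachable from $i$ (with $i \in S$). Every node of $S$ must then have row sum exactly $1$, for a reachable deficient node would furnish the forbidden walk; and $S$ is closed under transitions, since any $k$ with $p_{j,k} > 0$ for $j \in S$ is itself reachable from $i$. Consequently the principal submatrix $P_{S,S}$ is row-stochastic and has spectral radius $1$. Invoking the standard Perron--Frobenius fact that the spectral radius of a principal submatrix of a nonnegative matrix cannot exceed that of the whole matrix, I obtain $\rho(P) \ge \rho(P_{S,S}) = 1$, contradicting $\rho(P) < 1$.

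I expect the main obstacle to lie in the ``if'' direction, where the delicate point is not any individual estimate but the need for a single uniform exponent $m$ that works for all nodes at once; this is exactly what the monotonicity of $\bs r^{(k)}$ secures, while the strict positivity of the walk weights is what upgrades the purely qualitative reachability hypothesis into the quantitative strict decrease recorded by the telescoping identity. In the converse the only nonroutine ingredient is the submatrix spectral-radius monotonicity, which I would quote from standard nonnegative-matrix theory rather than reprove.
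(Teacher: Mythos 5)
The paper contains no proof of this lemma for you to be compared against: it is an imported result, stated with a citation to \cite[Lemma~2.2]{RP-AC-FB:14k}, and used as a black box in the proof of Theorem~\ref{thm:necessary_sufficient_finite}. Judged on its own merits, your argument is correct and complete. For the ``if'' direction, the chain of observations is sound: $\bs r^{(k)} = P^k\vectorones[n]$ is entrywise nonincreasing and bounded by $\vectorones[n]$; the telescoping identity $\bs r^{(k)}-\bs r^{(k+1)} = P^k(\vectorones[n]-\bs r)$ has nonnegative summands; a positive-weight walk of length $\ell$ from $i$ to a deficient node $j^\ast$ gives $(P^\ell)_{i,j^\ast}>0$ and hence the strict drop $\bs r^{(\ell+1)}_i<1$; and coordinatewise monotonicity then produces one uniform exponent $m$ with $\norm{P^m}_\infty = \max_i \bs r^{(m)}_i <1$, so $\rho(P)\le\norm{P^m}_\infty^{1/m}<1$. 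For the ``only if'' direction, your contrapositive is also correct: the set $S$ of nodes reachable from the offending node is closed under transitions and consists entirely of row-sum-one nodes, so $P_{S,S}$ is row-stochastic with $\rho(P_{S,S})=1$, and the standard Perron--Frobenius monotonicity of the spectral radius under passage to principal submatrices of nonnegative matrices gives $\rho(P)\ge 1$. This is essentially the standard proof of this result (and the same strategy as in the cited reference): quoting the submatrix monotonicity fact rather than reproving it is entirely reasonable, since for nonnegative matrices it is textbook material, and it is the only ingredient you did not derive from scratch.
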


\section{Meeting time of two randomly moving agents} \label{sec:Single_pursuer_evader}
In this section, we formulate the meeting time between a pursuer and an evader moving according to discrete-time Markov chains. We provide necessary and sufficient conditions for the finiteness of the meeting times given initial starting positions of the agents on the graph. 
\subsection{The meeting time of two Markov chains}
Consider the pursuer and evader performing random walks on a strongly connected graph $\G=(V,\EE)$ with the node set $\V =\until{n}$ and $\EE \subset \V \times \V$.  The transition matrices $P_\textnormal{p}$ of the pursuer and $P_\textnormal{e}$ of the evader satisfy $p_{i,j}^{(\textnormal{p})},p_{i,j}^{(\textnormal{e})} \geq 0$ if $(i,j) \in \EE$ and $p_{i,j}^{(\textnormal{p})},p_{i,j}^{(\textnormal{e})} = 0$ if $(i,j) \notin \EE$.

Let $X_t^{(\textnormal{p})},X_t^{(\textnormal{e})} \in\until{n}$  be the locations of the two agents at time $t\in\{0,1,2,\dots\}$, respectively. For any two starting nodes $i$ and $j$, the \emph{first meeting time from $i$ and $j$}, denoted by  $T_{i,j}$, is the first time that two random walkers meet at a common node when starting from nodes $i$ and $j$. Formally,
\begin{equation*}
  T_{i,j} =\min \setdef{t\geq 1}{X_t^{(\textnormal{p})} = X_t^{(\textnormal{e})}, X_0^{(\textnormal{p})} = i \text{ and } X_0^{(\textnormal{e})}=j}.
\end{equation*}
Note that the first meeting time can be infinite and it is easy to
construct examples in which the two agents never meet. Moreover, by definition, if the two agents are at the same location initially, i.e., $i=j$, then $T_{i,j}$ is the first time they meet again. Let
$\fmt_{i,j} = \mathbb{E}[T_{i,j}]$ be the expected first
meeting time starting from nodes $i$ and $j$. For the sake of brevity, we shall refer to the expected first meeting time as just the meeting time.

\begin{theorem}[The meeting time of two Markov chains]\label{thm:necessary_sufficient_finite}
  Consider two Markov chains with transition matrices $P_\textnormal{p}$ and
  $P_\textnormal{e}$ defined on a digraph $\G=(V,\EE)$ with the node set $V = \until{n}$.  The
  following statements are equivalent:
  \begin{enumerate}
  \item\label{thm11} for each pair of nodes $i,j$, the meeting time
    $\fmt_{i,j}$ from nodes $i$ and $j$ is finite;
  \item\label{thm12} for each pair of nodes $i,j$, there exists a node $k$ and
    a length $\ell$ such that a walk of length $\ell$ exists
    from $i$ to $k$ for $P_\textnormal{p}$ and a walk of length $\ell$ exists from
    $j$ to $k$ for $P_\textnormal{e}$;
  \item\label{thm13} for each pair of nodes $i,j$, there exists a
    walk for the stochastic matrix \textnormal{$P_\textnormal{e} \kron
    P_\textnormal{p}$} from node $(i,j)$ to a node $(k,k)$ in the Kronecker graph, for some
    $k\in\until{n}$;

  \item\label{thm14} the sub-stochastic matrix \textnormal{$(P_\textnormal{e} \kron P_\textnormal{p})E$}
    has spectral radius less than $1$ and the vector of meeting times is given by
    \textnormal{
    \begin{align} \label{eq:Vector2agent}
      \Vect{\FMT} = (I_{n^2}-(P_\textnormal{e}\kron P_\textnormal{p})E)^{-1} \vectorones[n^2],
    \end{align}}
    where $\FMT=[\fmt_{i,j}]$ and $E =I_{n^2}-\diag{\Vect{I_n}}$.
  \end{enumerate}
\end{theorem}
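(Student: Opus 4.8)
The plan is to lift the two independent walks to a single walk on the product space $\until{n}\times\until{n}$ of $n^2$ joint configurations, identify its transition matrix with $P_\textnormal{e}\kron P_\textnormal{p}$, and then read the four statements as four faces of the same object. First I would fix the convention that the joint state ``pursuer at $i$, evader at $j$'' sits in entry $(j-1)n+i$ of the column-stacking vectorization $\Vect{\cdot}$. With this convention, independence of the two chains makes the one-step joint transition matrix exactly $A := P_\textnormal{e}\kron P_\textnormal{p}$, which is row-stochastic by Lemma~\ref{lem:product-identities}(i) since $A\,\vectorones[n^2]=(P_\textnormal{e}\vectorones[n])\kron(P_\textnormal{p}\vectorones[n])=\vectorones[n^2]$. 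A meeting is precisely a visit to a diagonal node $(k,k)$, and $E=I_{n^2}-\diag{\Vect{I_n}}$ is the diagonal projector annihilating those nodes; hence $B:=(P_\textnormal{e}\kron P_\textnormal{p})E$ is $A$ with all transitions \emph{into} diagonal nodes deleted, and is therefore row-substochastic.

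Next I would set up the two computations underlying everything. For the formula, a first-step analysis gives $\fmt_{i,j}=1+\sum_{i'\neq j'}p^{(\textnormal{p})}_{i,i'}p^{(\textnormal{e})}_{j,j'}\fmt_{i',j'}$; collecting this into the matrix $\FMT$ and applying $\Vect{P_\textnormal{p}\,\widetilde{\FMT}\,P_\textnormal{e}^\top}=(P_\textnormal{e}\kron P_\textnormal{p})\Vect{\widetilde{\FMT}}$ from Lemma~\ref{lem:product-identities}(ii), where $\widetilde{\FMT}$ is $\FMT$ with its diagonal zeroed so that $\Vect{\widetilde{\FMT}}=E\Vect{\FMT}$, yields $\Vect{\FMT}=\vectorones[n^2]+(P_\textnormal{e}\kron P_\textnormal{p})E\,\Vect{\FMT}$. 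To handle possibly infinite meeting times cleanly I would instead record the tail identity that $\mathbb{P}[T_{i,j}>t]$ equals the $(i,j)$-entry of $B^t\vectorones[n^2]$ (the $t$-step survival probability of the chain killed on the diagonal), whence $\fmt_{i,j}=\sum_{t\geq0}(B^t\vectorones[n^2])$ at $(i,j)$; this makes the theorem hinge on convergence of $\sum_t B^t$. Separately, Lemma~\ref{lem:product-identities}(i) gives $A^\ell=P_\textnormal{e}^\ell\kron P_\textnormal{p}^\ell$, so the $A^\ell$-weight from $(i,j)$ to $(k,k)$ is $(P_\textnormal{e}^\ell)_{j,k}(P_\textnormal{p}^\ell)_{i,k}$, positive iff there are length-$\ell$ walks $i\to k$ for $P_\textnormal{p}$ and $j\to k$ for $P_\textnormal{e}$; quantifying over $\ell\geq1$ and $k$ gives \refp{thm12}$\Leftrightarrow$\refp{thm13} at once.

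The core of the argument, and the step I expect to be the main obstacle, is \refp{thm13}$\Leftrightarrow$\refp{thm14} via Lemma~\ref{lem:substochastic}. I would call a joint node \emph{good} if its $B$-row-sum is below $1$, i.e. $\sum_k p^{(\textnormal{p})}_{i,k}p^{(\textnormal{e})}_{j,k}>0$, which is exactly the existence of a one-step $A$-edge from $(i,j)$ to the diagonal, and \emph{bad} otherwise (row-sum $1$). The delicate point is that $B$-walks can never enter a diagonal node, so a Kronecker-graph walk to the diagonal is not literally a $B$-walk. I would resolve this by taking a \emph{shortest} $A$-walk from $(i,j)$ to the diagonal: its penultimate node is good, and the prefix reaching it uses only off-diagonal nodes and is therefore a genuine $B$-walk. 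This shows \refp{thm13} is equivalent to ``every node reaches a good node in the $B$-graph,'' which is equivalent to ``every bad node reaches a good node'' (good nodes reach themselves), i.e. precisely the hypothesis of Lemma~\ref{lem:substochastic}. Lemma~\ref{lem:substochastic} then yields that $B$ has spectral radius below $1$, at which point $\sum_{t\geq0}B^t=(I_{n^2}-B)^{-1}$ converges and the tail identity delivers $\Vect{\FMT}=(I_{n^2}-(P_\textnormal{e}\kron P_\textnormal{p})E)^{-1}\vectorones[n^2]$, which is \refp{thm14}.

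Finally I would close the cycle. The implication \refp{thm14}$\Rightarrow$\refp{thm11} is immediate, since the displayed inverse has finite nonnegative entries. For \refp{thm11}$\Rightarrow$\refp{thm13} I would argue the contrapositive: if some $(i,j)$ has no Kronecker-graph walk of positive length to any diagonal node, then $\mathbb{P}[X_t^{(\textnormal{p})}=X_t^{(\textnormal{e})}]=\sum_k(A^t\text{-weight from }(i,j)\text{ to }(k,k))=0$ for every $t\geq1$, so the agents never meet and $\fmt_{i,j}=+\infty$, contradicting \refp{thm11}. Throughout I would keep flagging the convention that walks have length $\ell\geq1$: this is what makes the case $i=j$ (agents starting together, required to meet \emph{again}) come out correctly, and it is the single place where a careless reading would break the equivalences.
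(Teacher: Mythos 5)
Your proof is correct, and its skeleton is the same as the paper's: lift to the product chain $A=P_\textnormal{e}\kron P_\textnormal{p}$, identify meetings with hits of the diagonal, and feed the killed matrix $B=(P_\textnormal{e}\kron P_\textnormal{p})E$ into Lemma~\ref{lem:substochastic} to close the cycle (i)$\Rightarrow$(ii)$\Leftrightarrow$(iii)$\Rightarrow$(iv)$\Rightarrow$(i). Within that skeleton you make three changes, each of which buys something. (a) You prove (ii)$\Leftrightarrow$(iii) in one line from $A^\ell=P_\textnormal{e}^\ell\kron P_\textnormal{p}^\ell$, where the paper instead cites an external result on Kronecker graph products. (b) You replace the paper's derivation of the linear system --- which rearranges the expectation recursion and thus tacitly assumes all $\fmt_{i,j}$ are finite, something not yet available when proving (iii)$\Rightarrow$(iv) --- with the survival identity $\fmt_{i,j}=\sum_{t\geq0}\mathbb{P}[T_{i,j}>t]$ together with $\mathbb{P}[T_{i,j}>t]=(B^t\vectorones[n^2])_{(i,j)}$, so that once the spectral radius is below $1$ you obtain \eqref{eq:Vector2agent} as the convergent Neumann series $\sum_{t\geq0}B^t\vectorones[n^2]$ with no circularity. (c) You flag and repair a gap the paper glosses over: Lemma~\ref{lem:substochastic} requires a walk in the graph of the substochastic matrix $B$ itself, whereas statement (iii) supplies a walk in the graph of $A$ that ends on (and could pass through) the very diagonal nodes whose incoming edges $B$ deletes; your shortest-walk argument --- the penultimate node of a shortest $A$-walk to the diagonal is ``good,'' and the prefix reaching it avoids the diagonal, hence is a legitimate $B$-walk --- is exactly the missing step. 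In short: same route as the paper, executed more carefully than the paper's own proof.
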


\begin{proof}
  For the nodes $i$ and $j$, the first meeting time satisfies the recursive formula
  \begin{equation*}
    T_{i,j} =
    \begin{cases}
      1, & \text{w.p.}~\sum_k p_{i,k}^{(\textnormal{p})}p_{j,k}^{(\textnormal{e})},\\
      T_{k_1,h_1}+1, & \text{w.p.}~ p_{i,k_1}^{(\textnormal{p})}p_{j,h_1}^{(\textnormal{e})}, k_1\neq h_1.
    \end{cases}
  \end{equation*}
  Taking the expectation we have
  \begin{equation}\label{eq:meetingtime}
    \begin{split}
      \fmt_{i,j}  &=\sum_{k} p_{i,k}^{(\textnormal{p})}p_{j,k}^{(\textnormal{e})} + \sum_{k_1 \neq h_1 } p_{i,k_1}^{(\textnormal{p})}p_{j,h_1}^{(\textnormal{e})} (\fmt_{k_1,h_1} +1),\\
      &= \sum_{k_1}\sum_{h_1}p_{i,k_1}^{(\textnormal{p})} p_{j,h_1}^{(\textnormal{e})} + \sum_{k_1 \neq h_1 } p_{i,k_1}^{(\textnormal{p})}p_{j,h_1}^{(\textnormal{e})} \fmt_{k_1,h_1} , \\
      &=1 + \sum_{k_1 \neq h_1 } p_{i,k_1}^{(\textnormal{p})}p_{j,h_1}^{(\textnormal{e})} \fmt_{k_1,h_1}\\
      &=1 + \sum_{k_1,h_1} p_{i,k_1}^{(\textnormal{p})} \fmt_{k_1,h_1} p_{j,h_1}^{(\textnormal{e})}  - \sum_{k=1}^n p^{(\textnormal{p})}_{i,k}\fmt_{k,k}p^{(\textnormal{e})}_{j,k}.
    \end{split}
  \end{equation}
  We write \eqref{eq:meetingtime} in matrix form as
    \begin{equation}\label{eq:meetingtimematrix}
      \FMT = \vectorones[n] \vectorones[n]^\top + P_\textnormal{p}(\FMT-\diag{\FMT})P_\textnormal{e}^\top,
  \end{equation}
  where $\diag{\FMT} \in \mathbb{R}^{n\times n}$ is a diagonal matrix with only the diagonal elements of $\FMT$ . 
  Rewriting \eqref{eq:meetingtimematrix} in vector form and using properties in Lemma~\ref{lem:product-identities} gives
  \begin{equation*}
    \begin{split}
      \Vect{\FMT} &= \vectorones[n^2] + (P_\textnormal{e}\kron P_\textnormal{p})(\Vect{\FMT}  -\Vect{\diag{\FMT}}),\\
     &= \vectorones[n^2] + (P_\textnormal{e}\kron P_\textnormal{p})(I_{n^2}-\diag{\Vect{I_n}}) \Vect{M},\\
          &= \vectorones[n^2] + (P_\textnormal{e}\kron P_\textnormal{p})E \Vect{M}.
    \end{split}
  \end{equation*}
   If the matrix $I_{n^2}-(P_\textnormal{e}\kron P_\textnormal{p})E$ is invertible, then we have a unique solution to the meeting times.

  We shall now show that the finiteness of meeting times as in \ref{thm11} is equivalent to the existence of walks of equal length to common nodes as mentioned in \ref{thm12} and in \ref{thm13}, which guarantees invertibility of $I_{n^2}-(P_\textnormal{e}\kron P_\textnormal{p})E$ in $(iv)$.


  We first prove that \ref{thm11} $\implies$ \ref{thm12} by contrapositive. Suppose there exists a pair of nodes $i$ and $j$ such that there exists no walk of equal length to any node in $V$, then the agents never meet and thus the meeting time cannot be finite. Therefore, we have \ref{thm11} $\implies$ \ref{thm12}.

  Next, we show that \ref{thm12} $\iff$ \ref{thm13}. The Kronecker product of the transition matrices gives a joint transition matrix for the agents over the set of nodes $V \times V$. The entry in the matrix $P_\textnormal{e} \kron P_\textnormal{p}$ corresponding to the node $(i,j)$ represents the states $X^{(\textnormal{p})}=i$ and $X^{(\textnormal{e})}=j$ \cite{PMW:62}. The statement $(ii)$ ensures the existence of a node $k$ for every pair $(i,j)$ which is reachable by a walk of equal length from $i$ in $P_\textnormal{p}$ and $j$ in $P_\textnormal{e}$. This condition is equivalent to the node $(k,k)$ being reachable from the pair $(i,j)$ on the Kronecker product of the two Markov chains \cite[Proposition 1]{FH-CATJ:66}.

  Next, we show that \ref{thm13} $\implies$ \ref{thm14}. The stochastic matrix $P_\textnormal{e} \kron P_\textnormal{p}$ has a walk from any node $(i,j)$ to some node $(h_1,k_1)$ where $\mathbb{P}[X_1^{(\textnormal{e})}=k,X_1^{(\textnormal{p})}=k\,|\, X_0^{(\textnormal{e})}=h_1,X_0^{(\textnormal{p})}=k_1]\neq 0$ as there exists a walk from $(i,j)$ to $(k,k)$ for some $k$. Note that $(P_\textnormal{e} \kron P_\textnormal{p})E$ is obtained by setting the columns of $P_\textnormal{e} \kron P_\textnormal{p}$ corresponding to nodes of the form $(k,k)$ to $0$. Therefore, the row corresponding to $(h_1,k_1)$ has row-sum strictly less than 1. Therefore every node $(i,j)$ has a walk to a node whose corresponding row-sum of the transition matrix is less than 1, which implies that the matrix $(P_\textnormal{e} \kron P_\textnormal{p}) E$ has spectral radius less than $1$ by virtue of Lemma~\ref{lem:substochastic}. From this we obtain equation \eqref{eq:Vector2agent} since $(iii)$ guarantees the existence of $(I_{n^2}-(P_\textnormal{e}\kron P_\textnormal{p})E)^{-1}$.

  Note that the existence of $\Vect{\FMT}$ in \ref{thm14} gives \ref{thm14}$\implies$ \ref{thm11}. Thus we have shown that \ref{thm11}~$ \implies$~\ref{thm12}~$\iff$~\ref{thm13}~$\implies$~\ref{thm14}~$\implies $~\ref{thm11}. Hence the four statements are equivalent.
\end{proof}

\begin{remark}
The finiteness of meeting times is not guaranteed even if both $P_p$ and $P_e$ are irreducible, and a simple example is given in Fig.~\ref{fig:twonodeexample}.
\end{remark}

\begin{figure}[http]
  \includegraphics[scale = 0.32]{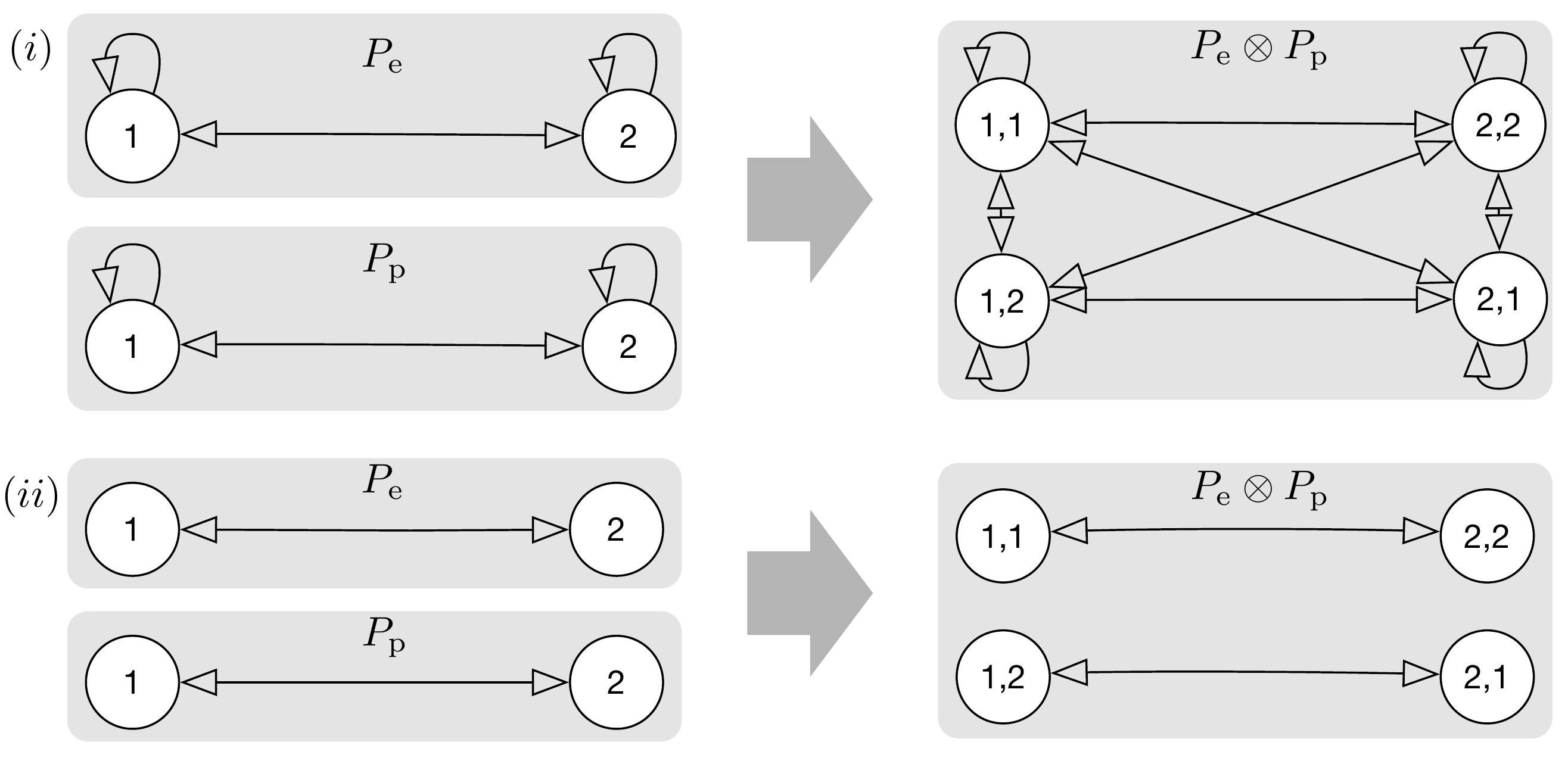}
  \caption{The pursuer-evader pair in $(i)$ has finite meeting times as every node has a walk to the common nodes $(1,1)$ and $(2,2)$ in the Kronecker graph. However, in $(ii)$ there exists no walks to common nodes from $(1,2)$ and $(2,1)$.}\label{fig:twonodeexample}
\end{figure}

The necessary and sufficient conditions in Theorem~\ref{thm:necessary_sufficient_finite} give the most general
set of pairs of matrices for which finite meeting times
exist. Moreover, the closed-form expression \eqref{eq:Vector2agent} for pairwise meeting times enables one to design the optimal strategy for the surveillance agent to minimize the mean meeting time given the strategy of a moving intruder.

\subsection{Mean meeting time and relation to hitting times}

In this subsection, we introduce the \emph{mean meeting time} of two random walkers. We then show that the mean hitting time can be treated as a special case of mean meeting time where a mobile pursuer is faced with a stationary intruder.

\begin{definition}[Mean meeting time]\label{def:meetingtime}
  Consider two transition matrices $P_\textnormal{p}$ and $P_\textnormal{e}$ with the stationary distributions $\pi_\textnormal{p}$ and $\pi_\textnormal{e}$, the mean meeting time $\MFMT(P_\textnormal{p},P_\textnormal{e})$ is defined by
  \begin{equation}\label{eq:MFMT}
    \MFMT(P_\textnormal{p},P_\textnormal{e}) = \pi_\textnormal{p}^\top \FMT \pi_\textnormal{e} =(\pi_\textnormal{e} \kron \pi_\textnormal{p})^\top \Vect{\FMT},
  \end{equation}
  where $M$ is meeting time matrix of $P_\textnormal{p}$ and $P_\textnormal{e}$ .
\end{definition}

The mean meeting time \eqref{eq:MFMT} can also be written in element-wise form as follows,
  \begin{equation*}
      \MFMT(P_\textnormal{p},P_\textnormal{e}) = \sum_{i}\sum_{j}\pi_\textnormal{p}^{(i)}\pi_\textnormal{e}^{(j)} m_{i,j},
  \end{equation*}
where it is clear that the mean meeting time is the weighted sum of the pairwise meeting times with weights being the stationary distributions.

\begin{remark}
In Definition~\ref{def:meetingtime}, the uniqueness of the stationary distributions for $P_\textnormal{p}$ and $P_\textnormal{e}$ is not required. However, in order to compute the mean meeting time, one has to specify \emph{a} stationary distribution consistent with the Markov chain for $P_e$ and $P_p$, respectively.
\end{remark}

Our next result shows that the hitting times of a Markov chain are equal to the meeting times of the Markov chain and a stationary evader.

\begin{corollary}[Connections with hitting times and meeting times with stationary evader]
  Consider a stationary evader with distribution $\pi_e$ and a pursuer with an irreducible transition matrix $P_p$ and stationary distribution $\pi_p$, then the following properties hold:
  \begin{enumerate}
  \item  the meeting times between the stationary evader and the mobile pursuer are equal to the pairwise hitting times of $P_p$ and are given by
    \begin{equation}
      h_{i,j} = m_{i,j} = (\vect{e}_j \kron \vect{e}_i)^\top(I_{n^2} - (I_n \kron P_p)E)^{-1}\vectorones[n^2],
    \end{equation}
    where $h_{i,j}$ is the expected time to travel from node $i$ to node $j$ for $P_p$ and
  \item  the mean meeting time between the stationary evader and the pursuer is given by
    \begin{multline}\label{eq:meanmeetingst}
      \MFMT_\textnormal{stationary}(\pi_e,P_p)\\ = (\pi_e \kron \pi_p)^\top(I_{n^2} - (I_n \kron P_p)E)^{-1}\vectorones[n^2].
    \end{multline}
  \end{enumerate}
\end{corollary}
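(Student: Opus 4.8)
The plan is to recognize that a \emph{stationary} evader is precisely one whose transition matrix is the identity, $P_\textnormal{e}=I_n$, since such a walker remains at its initial node for all time. Every claim then follows by specializing the general machinery of Theorem~\ref{thm:necessary_sufficient_finite} to this case. First I would verify the finiteness hypothesis: because $P_p$ is irreducible, from any pair $(i,j)$ the pursuer can reach node $j$ in finitely many steps while the evader sits at $j$, so the Kronecker walk reaches $(j,j)$. Hence condition~\ref{thm13} holds for every pair $(i,j)$ with the common node $k=j$, and Theorem~\ref{thm:necessary_sufficient_finite} guarantees that $I_{n^2}-(I_n\kron P_p)E$ is invertible and that the meeting times are given by \eqref{eq:Vector2agent}.

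Next I would substitute $P_\textnormal{e}=I_n$ into \eqref{eq:Vector2agent} to obtain $\Vect{\FMT}=(I_{n^2}-(I_n\kron P_p)E)^{-1}\vectorones[n^2]$, and then extract a single entry. Since $\Vect{\FMT}$ stacks the columns of $\FMT$, the scalar $m_{i,j}$ occupies position $(j-1)n+i$, which is selected exactly by left-multiplication with $(\vect{e}_j\kron\vect{e}_i)^\top$ (the Kronecker product places the nonzero entry of $\vect{e}_i$ into the $j$-th block). This yields the displayed closed form for $m_{i,j}$ in part (i).

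The conceptual heart is the identity $m_{i,j}=h_{i,j}$. With the evader frozen at $j$, the meeting event $X_t^{(\textnormal{p})}=X_t^{(\textnormal{e})}$ reduces to $X_t^{(\textnormal{p})}=j$, so the first meeting time $T_{i,j}=\min\{t\ge1:X_t^{(\textnormal{p})}=j\}$ is exactly the first-passage time of the single pursuer walk $P_p$ from $i$ to $j$; taking expectations gives $m_{i,j}=h_{i,j}$. The hard part will be the convention on the diagonal $i=j$: the constraint $t\ge1$ in the meeting-time definition forces $h_{i,i}$ to be the mean \emph{return} time to $i$ rather than $0$, so I must state the hitting-time convention (hitting with $t\ge1$, equivalently $h_{i,i}=1/\pi_p^{(i)}$) so that the two quantities coincide for all pairs, not merely for $i\neq j$.

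Finally, for part (ii) I would invoke Definition~\ref{def:meetingtime}, which expresses the mean meeting time as $(\pi_e\kron\pi_p)^\top\Vect{\FMT}$; substituting the vector formula derived above gives \eqref{eq:meanmeetingst} directly. I would remark that, although $P_\textnormal{e}=I_n$ admits no unique stationary distribution, the discussion following Definition~\ref{def:meetingtime} permits fixing any admissible $\pi_e$, and the stated formula holds for that chosen distribution.
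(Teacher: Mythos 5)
Your proposal is correct and follows essentially the same route as the paper, whose entire proof is the single observation that a stationary evader is described by the identity transition matrix $I_n$; everything else you wrote (finiteness via irreducibility and condition~\ref{thm13}, entry extraction via $(\vect{e}_j \kron \vect{e}_i)^\top$, substitution into Definition~\ref{def:meetingtime}) is a faithful expansion of the details the paper leaves implicit. Your attention to the diagonal convention ($m_{i,i}$ being a return time, so $h_{i,i}=1/\pi_p^{(i)}$ rather than $0$) is a genuine refinement that the paper's one-line proof silently glosses over.
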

\begin{proof}
   The conclusion follows by observing that a stationary evader can be described by the identity transition matrix $I_n$. 
\end{proof}
\begin{remark}
When the stationary distribution of the evader $\pi_e$ is equal to the
stationary distribution of the pursuer $\pi_p$, the expression~\eqref{eq:meanmeetingst} for the
meeting time is also identical to the mean first passage time, also called
\emph{Kemeny constant}, of the Markov chain $P_p$~\cite[Theorem
  2.3(i)]{RP-AC-FB:14k}.
\end{remark}


\section{Applications to Robotic Surveillance}\label{sec:simulation}
In this section, we numerically minimize the mean
meeting time for the mobile pursuer given various strategies of the
intruder in various prototypical graphs. The optimization problem we are
interested in is as follows.

\begin{problem}\label{prob:meanmeetingtime}
(Minimization of the mean meeting time) Given a strongly connected directed graph $\mathcal{G}=(V,\mathcal{E})$, an irreducible Markov chain $P_e$ and the stationary distributions $\pi_e$ and $\pi_p$. Find $P_p$ which minimizes the mean meeting time $\MFMT(P_\textnormal{p},P_\textnormal{e})$, i.e., solve the following optimization problem:
\begin{align*}
& \underset{{P_p}}{\textup{minimize}}
& &(\pi_\textnormal{e} \kron \pi_\textnormal{p})^\top \Vect{\FMT}\\
& \textup{subject to}
&& \pi_p^\top P_p=\pi_p^\top,\\
&&&P_p\vectorones[n]=\vectorones[n],\\
&&&p_{i,j}^{(p)}\geq0,\quad \forall(i,j)\in E,\\
&&& p_{i,j}^{(p)}=0,\quad \forall(i,j)\notin E.
\end{align*}
\end{problem}

The mean meeting time measures in expectation how fast the pursuer is able to capture the evader when they start from different initial positions. By minimizing the mean meeting time, we obtain a fast pursuer given the strategy of the evader. Problem~\ref{prob:meanmeetingtime} is a nonconvex optimization problem with
the Kemeny constant minimization problem as a special case. We conduct the
numerical optimization using the KNITRO/TOMLAB package (with an
implementation of the sequential quadratic programming algorithm), where
the stationary distribution of $P_p$ is set to be the same as that of
$P_e$, i.e., $\pi_p=\pi_e$.


\subsection{Evader models}
We consider three different strategies for the evader, i.e., the random
walk (RW), the unpredictable evader modeled by the Markov chain with maximum entropy rate, and the fast evader modeled by the
Markov chain with minimum Kemeny constant.

In the random walk model, the evader transitions from her current location
to the neighbors (including the current location) with the same probability
that is equal to the reciprocal of the out-degrees. The random walk
maximizes the local uncertainty of the movement of the evader.

For the unpredictable evader, given the stationary distribution $\pi_e$, the evader solves the following convex optimization problem.
\begin{align*}
& \underset{{P_e}}{\textup{maximize}}
& &-\sum_{i=1}^n\sum_{j=1}^n\pi_e(i)p_{i,j}^{(e)}\log p_{i,j}^{(e)}\\
& \textup{subject to}
&& \pi_e^\top P_e=\pi_e^\top,\\
&&&P_e\vectorones[n]=\vectorones[n],\\
&&&p_{i,j}^{(e)}\geq0,\quad \forall(i,j)\in E,\\
&&& p_{i,j}^{(e)}=0,\quad \forall(i,j)\notin E.
\end{align*}
The unpredictable evader uses a Markov chain that has the maximum entropy rate with a given stationary distribution. The evader is unpredictable in terms of the sequence of locations that she visits~\cite{MG-SJ-FB:17b}.

For the fast evader, given the stationary distribution $\pi_e$, the evader solves the following nonconvex optimization problem.
\begin{align*}
& \underset{{P_e}}{\textup{minimize}}
& &\MFMT_\textnormal{stationary}(\pi_e,P_e)\\
& \textup{subject to}
&& \pi_e^\top P_e=\pi_e^\top,\\
&&&P_e\vectorones[n]=\vectorones[n],\\
&&&p_{i,j}^{(e)}\geq0,\quad \forall(i,j)\in E,\\
&&& P_e(i,j)=0,\quad \forall(i,j)\notin E,
\end{align*}
where $\MFMT_\textnormal{stationary}(\pi_e,P_e)$ is given in~\eqref{eq:meanmeetingst} with $\pi_p=\pi_e$. The fast evader uses a Markov chain that has minimum Kemeny constant. The evader is fast because the expected hitting time between pairs of locations on the graph is minimized~\cite{RP-PA-FB:14b}.

\subsection{Analysis and results for different graphs}
In this subsection, we consider different graph topology, i.e., ring, complete and grid, and solve for the best pursuer strategy $P_p$ numerically. Since Problem~\ref{prob:meanmeetingtime} is in general a nonconvex optimization problem, we consider relatively small graph sizes $n=5$ and $n=6$ for the ring and complete, and $n=9$ for the grid. In all the computations where a stationary distribution needs to be specified, we set the stationary distribution of the agents to be uniform, i.e., $\pi_\textnormal{p}=\pi_\textnormal{e}=\frac{1}{n}\vectorones[n]$.

\emph{Results for ring graphs}: Note that ring graphs possess Hamiltonian tours, which can be parameterized by Markov chains as permutation matrices with a uniform stationary distribution. Therefore, the fast evader on ring graphs follow Hamiltonian tours. However, depending on the number of nodes in the graph, the optimal strategies for the pursuer against the fast evader are different. When $n=5$, the optimal strategy for the pursuer given by the solver is a Hamiltonian tour in the opposite direction from that of the evader. This coincides with our intuition because walking in a different direction for the pursuer should make it faster to catch the evader. However, it turns out that staying stationary for the pursuer is equally good as walking in the opposite direction. This happens because the pursuer may miss the evader when walking in an opposite direction.  We formalize this observation as follows.

\begin{lemma}[Equally good strategies in a ring graph]
In a ring graph with an odd number of nodes, if the evader adopts a Hamiltonian tour, then staying stationary and the Hamiltonian tour in the opposite direction are equally good for a pursuer with uniform stationary distribution.
\end{lemma}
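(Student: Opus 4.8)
The plan is to compute the two relevant mean meeting times explicitly and show they coincide. Let the ring have $n$ nodes (with $n$ odd), labeled $\{1,\dots,n\}$ cyclically, and suppose the evader follows a Hamiltonian tour, which by assumption is a permutation matrix; say $P_\textnormal{e}$ is the cyclic shift $i \mapsto i+1 \pmod n$. I would compare two candidate pursuer strategies: (a) the stationary pursuer $P_\textnormal{p} = I_n$, and (b) the counter-rotating Hamiltonian tour $P_\textnormal{p}: i \mapsto i-1 \pmod n$. Both have the uniform stationary distribution $\pi_\textnormal{p} = \pi_\textnormal{e} = \frac{1}{n}\vectorones[n]$, so both are feasible for Problem~\ref{prob:meanmeetingtime}. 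The goal is to show $\MFMT(I_n, P_\textnormal{e}) = \MFMT(P_\textnormal{p}, P_\textnormal{e})$.

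First I would exploit the rotational symmetry of the ring to reduce each computation to a one-dimensional gap process. Because both chains are deterministic shifts, the relevant quantity is the \emph{relative position} $d_t = X_t^{(\textnormal{e})} - X_t^{(\textnormal{p})} \pmod n$, and meeting occurs exactly when $d_t \equiv 0$. In case (a), the stationary pursuer gives $d_{t+1} = d_t + 1 \pmod n$, so starting from an initial gap $d_0 = d$ the first meeting time is exactly $n - d$ (for $d \in \{1,\dots,n-1\}$) and $n$ when $d=0$ (meeting again). In case (b), the counter-rotating pursuer gives $d_{t+1} = d_t + 2 \pmod n$; since $n$ is odd, $2$ is invertible modulo $n$ and the map $d \mapsto d+2$ generates all residues, so from gap $d$ the first meeting time is the smallest $t \geq 1$ with $2t \equiv -d \pmod n$. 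By symmetry of the uniform initial distribution over all gaps, the mean meeting time in each case is the average of these deterministic first-hitting values over a complete residue system, and averaging a quantity of the form $a t \bmod n$ over the gap $d$ (equivalently over $t$) yields the same arithmetic-series average $\tfrac{1}{n}\sum_{t=1}^{n} t = \tfrac{n+1}{2}$ in both cases, because $2$ being a unit mod $n$ makes $d \mapsto t(d)$ a bijection of $\{1,\dots,n\}$ onto itself.

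Concretely, I would write $\MFMT = \tfrac{1}{n^2}\sum_{i,j} m_{i,j}$ using the element-wise form from Definition~\ref{def:meetingtime} with uniform weights, observe that by translation invariance $m_{i,j}$ depends only on the gap $d = j-i \pmod n$, so $\MFMT = \tfrac{1}{n}\sum_{d=0}^{n-1} \tau(d)$, where $\tau(d)$ is the deterministic meeting time from gap $d$. In case (a), $\tau(d) = n-d$ for $d \neq 0$ and $\tau(0)=n$, giving $\sum_d \tau(d) = n + \sum_{d=1}^{n-1}(n-d) = n + \binom{n}{2}$. In case (b), $\tau(d)$ is the unique $t \in \{1,\dots,n\}$ solving $2t \equiv -d \pmod n$; as $d$ ranges over all residues, $t$ also ranges over all of $\{1,\dots,n\}$ exactly once (invertibility of $2$), so $\sum_d \tau(d) = \sum_{t=1}^{n} t = \binom{n+1}{2} = n + \binom{n}{2}$. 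The two sums agree, hence the two mean meeting times are equal.

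The main obstacle will be handling the modular bookkeeping cleanly — in particular justifying that $2$ is a unit modulo $n$ precisely because $n$ is odd (this is where oddness is essential), and correctly accounting for the $d=0$ diagonal term, where the meeting time is the return time rather than zero. I expect the reduction to the gap process and the translation-invariance argument to be routine given the deterministic permutation structure, so the crux is verifying that the bijection $d \mapsto \tau(d)$ in case (b) is exactly a permutation of $\{1,\dots,n\}$, which is what forces the two averages to match. For even $n$ the argument breaks down because $2$ is not invertible, consistent with the lemma's hypothesis that $n$ be odd.
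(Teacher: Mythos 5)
Your proof is correct and arrives at the same value $\tfrac{n+1}{2}$ for both pursuer strategies, so the lemma follows. The overall strategy matches the paper's — compute the two mean meeting times explicitly and check that they agree — but your execution differs in a way worth noting. For the stationary pursuer, the paper simply invokes the fact that the mean meeting time against a uniformly placed stationary pursuer equals the Kemeny constant of the evader's chain, asserted to be $\tfrac{n+1}{2}$ for a Hamiltonian tour (with the return-time convention on the diagonal); you instead compute it directly from the gap process, which amounts to the same arithmetic series. For the counter-rotating pursuer, the paper fixes $X_0^{(\textnormal{e})}=1$ and enumerates the deterministic meeting times by parity of the pursuer's start ($n$ if $X_0^{(\textnormal{p})}=1$, $\tfrac{X_0^{(\textnormal{p})}-1}{2}$ if $X_0^{(\textnormal{p})}>1$ is odd, $\tfrac{n+X_0^{(\textnormal{p})}-1}{2}$ if even), then sums two arithmetic series; you replace this case analysis with the observation that the gap evolves as $d \mapsto d+2 \pmod{n}$, so the meeting time $\tau(d)$ is the unique $t \in \{1,\dots,n\}$ with $2t \equiv -d \pmod{n}$, and invertibility of $2$ modulo odd $n$ makes $d \mapsto \tau(d)$ a bijection onto $\{1,\dots,n\}$, forcing the sum $\binom{n+1}{2}$ without tracking which initial gap yields which value. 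Your route buys two things: it makes the role of the oddness hypothesis completely transparent (for even $n$ the map $d\mapsto d+2$ does not reach all residues, so some gaps never close — consistent with the paper's observation that for $n=6$ the opposite-direction strategy yields infinite meeting times), and it eliminates the parity bookkeeping. The paper's enumeration, by contrast, exhibits the individual meeting times explicitly, which is what underlies its qualitative remark that counter-rotation is much faster from some initial conditions yet no better on average; your bijection argument hides this distributional information. Both proofs correctly treat the diagonal term as a return time ($\tau(0)=n$) rather than zero, as required by the paper's definition of $T_{i,j}$ with $t \geq 1$.
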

\begin{proof}
If the pursuer stays stationary with the distribution $\frac{1}{n}\vectorones[n]$, then the mean meeting time is the same as the Kemeny constant of the chain used by the fast evader, which is equal to $\frac{n+1}{2}$.

On the other hand, suppose the pursuer walks in the oppositive direction from the evader. By symmetry, we can fix the initial condition of the evader to be $X_0^{(e)}=1$ and vary the initial condition of the pursuer $X_0^{(p)}$. If $X_0^{(p)}=1$, then the mean meeting time is $n$; If $X_0^{(p)}>1$ is odd, then the mean meeting time is $\frac{X_0^{(p)}-1}{2}$; If $X_0^{(p)}$ is even, then the mean meeting time is $\frac{n+X_0^{(p)}-1}{2}$. Therefore, the mean meeting time can be calculated as
\begin{equation*}
\MFMT(P_\textnormal{p},P_\textnormal{e})=\frac{1}{n}(\sum_{i=1}^{\frac{n-1}{2}}\frac{n+2i-1}{2}+\sum_{i=1}^{\frac{n-1}{2}}i+n)=\frac{n+1}{2},
\end{equation*}
which is the same as in the case of staying stationary.
\end{proof}
When $n=6$, the optimal strategy given by the solver is to stay stationary. Different from the case when the number of nodes is odd, walking in a different direction from the evader is bad because there are certain pairs of initial positions starting from which the pursuer and the evader never meet, i.e., the mean meeting time is infinite.

In the ring graph, the RW and unpredictable evader uses the same chain where the evader moves to the neighbor nodes of her current position with equal probabilities. The optimal strategy for the pursuer given by the solver in these cases is a Hamiltonian tour regardless of the number of nodes in the graph. We summarize the results for the ring graph in Table~\ref{tb:ring}.

\begin{table}
\centering
  \begin{threeparttable}
\caption{Best Response for Pursuer in Ring Graphs}\label{tb:ring}
\begin{tabular}{|c|c|c|}
\hline
\diagbox[width=10em]{Number\\ of nodes}{Evader strategy}&Fast&RW/ Unpredictable \\ \hline
n=5   &stationary or $P_e^\top$ &\multirow{2}{*}{Hamiltonian tour} \\ \cline{1-2}
n=6&stationary &\\ \hline
\end{tabular}
    \begin{tablenotes}
      \small
       \item When the number of nodes is odd, the best strategy for the pursuer against a fast evader is to either stay stationary or being fast in the opposite direction; when the number of nodes is even, staying stationary is the best. When the evader is unpreditcable/slow, being fast is always good.
    \end{tablenotes}
  \end{threeparttable}
\end{table}

\emph{Results for the complete graphs}: Same as the ring graphs, complete graphs also possess Hamiltonian tours, and the fast evader on complete graphs follow a Hamiltonian tour on the graph. Therefore, the results for the fast evader in ring graphs carry over.

On the other hand, in complete graphs, the RW and unpredictable strategies are the same and equal to $\frac{1}{n}\vectorones[n]\vectorones[n]^\top$. The following result shows that if the evader adopts RW or unpredictable strategy in the complete graph, then the mean meeting time is $n$ regardless of pursuer's strategy.

\begin{lemma}[Strategy insensitivity in a complete graph]\label{lemma:complete}
If the evader's strategy on a complete graph is $\frac{1}{n}\vectorones[n]\vectorones[n]^\top$, then the mean meeting time between the evader and the pursuer is always $n$ regardless of pursuer's strategy.
\end{lemma}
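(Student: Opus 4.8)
The plan is to exploit the very special structure of the evader's chain $P_\textnormal{e} = \frac1n \vectorones[n]\vectorones[n]^\top$: from any current node the evader jumps to a node drawn uniformly at random, independently of both its own location and of the pursuer. Consequently the evader's positions $X_1^{(\textnormal{e})}, X_2^{(\textnormal{e})}, \dots$ form an i.i.d.\ sequence, each uniform on $\until{n}$, and independent of the pursuer's trajectory. I would show that this forces the per-step meeting probability to equal $1/n$ at every step, irrespective of where the pursuer happens to be, so that the first meeting time is geometrically distributed with parameter $1/n$.

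Concretely, I would first condition on the entire pursuer trajectory $X_1^{(\textnormal{p})}, X_2^{(\textnormal{p})}, \dots$, which is legitimate since the two chains are independent. Fixing any $t \geq 1$, the evader position $X_t^{(\textnormal{e})}$ is uniform and independent of $X_1^{(\textnormal{e})}, \dots, X_{t-1}^{(\textnormal{e})}$ and of the pursuer path; hence, conditioned on the event that no meeting has occurred before time $t$ (an event measurable with respect to $X_1^{(\textnormal{e})},\dots,X_{t-1}^{(\textnormal{e})}$ and the pursuer path), the probability of meeting at time $t$ is $\prob[X_t^{(\textnormal{e})} = X_t^{(\textnormal{p})}] = 1/n$. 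This yields $\prob[T_{i,j} = t] = (1-\tfrac1n)^{t-1}\tfrac1n$, so $\fmt_{i,j} = \expt[T_{i,j}] = n$ for every pair $(i,j)$ and every pursuer strategy; in particular all meeting times are finite.

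With $\FMT = n\,\vectorones[n]\vectorones[n]^\top$ in hand, the conclusion is immediate from Definition~\ref{def:meetingtime}: since $\pi_\textnormal{p}^\top \vectorones[n] = \vectorones[n]^\top\pi_\textnormal{e} = 1$,
\[
\MFMT(P_\textnormal{p},P_\textnormal{e}) = \pi_\textnormal{p}^\top \FMT \pi_\textnormal{e} = n\,(\pi_\textnormal{p}^\top\vectorones[n])(\vectorones[n]^\top\pi_\textnormal{e}) = n.
\]
As an independent check avoiding the probabilistic argument, I would verify algebraically that $\Vect{\FMT} = n\vectorones[n^2]$ satisfies the fixed-point equation underlying \eqref{eq:Vector2agent}, namely $\Vect{\FMT} = \vectorones[n^2] + (P_\textnormal{e}\kron P_\textnormal{p})E\,\Vect{\FMT}$: using $E\vectorones[n^2] = \vectorones[n^2] - \Vect{I_n}$, property (ii) of Lemma~\ref{lem:product-identities}, and the identities $P_\textnormal{e}^\top = P_\textnormal{e}$ and $P_\textnormal{p}\vectorones[n] = \vectorones[n]$, one gets $(P_\textnormal{e}\kron P_\textnormal{p})\Vect{I_n} = \Vect{P_\textnormal{p}P_\textnormal{e}^\top} = \tfrac1n\vectorones[n^2]$, and the equation checks out.

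The step I expect to require the most care is the conditioning argument that yields the exact geometric law: one must confirm that conditioning on ``no meeting before time $t$'' does not perturb the uniform, independent distribution of $X_t^{(\textnormal{e})}$. This is precisely where the memorylessness of the rank-one evader chain and its independence from the pursuer are essential, and it is what makes the meeting time insensitive to $P_\textnormal{p}$. The algebraic route sidesteps this subtlety but presupposes finiteness (spectral radius strictly less than $1$), which is guaranteed here by Theorem~\ref{thm:necessary_sufficient_finite}\ref{thm13}, since from every pair $(i,j)$ a single transition reaches some node $(k,k)$ with positive probability.
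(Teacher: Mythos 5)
Your proof is correct, but it takes a genuinely different route from the paper. The paper's argument is purely algebraic and stays inside the recursion \eqref{eq:meetingtime}: substituting $P_\textnormal{e}=\frac1n\vectorones[n]\vectorones[n]^\top$ there shows $\fmt_{i,j}$ does not depend on $j$, so writing $\tilde{\fmt}_i = \fmt_{i,j}$ gives $\tilde{\fmt}_i = 1+\frac{n-1}{n}\sum_{k}p^{(\textnormal{p})}_{i,k}\tilde{\fmt}_k$; left-multiplying by $\pi_\textnormal{p}^\top$ and using stationarity $\pi_\textnormal{p}^\top P_\textnormal{p}=\pi_\textnormal{p}^\top$ collapses this to the scalar equation $\MFMT = 1+\frac{n-1}{n}\MFMT$, whence $\MFMT=n$. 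Your primary argument is instead probabilistic: the rank-one evader chain makes $X_1^{(\textnormal{e})},X_2^{(\textnormal{e})},\dots$ i.i.d.\ uniform and independent of the pursuer, so conditioning on the pursuer's path shows $T_{i,j}$ is exactly geometric with parameter $1/n$. This buys a strictly stronger conclusion than the paper proves: $\fmt_{i,j}=n$ for \emph{every} pair $(i,j)$ (the paper only controls the weighted average), and consequently $\MFMT=n$ for \emph{any} weighting vectors, without ever invoking stationarity of $\pi_\textnormal{p}$ with respect to $P_\textnormal{p}$. It also makes finiteness explicit, which the paper's proof tacitly presumes when it manipulates the recursion. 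Your secondary algebraic check (plugging $\Vect{\FMT}=n\vectorones[n^2]$ into the fixed-point equation and invoking Theorem~\ref{thm:necessary_sufficient_finite}\ref{thm13}$\implies$\ref{thm14} for uniqueness via the spectral-radius condition) is the closest in spirit to the paper, and you correctly flag that uniqueness, not mere satisfaction of the equation, is what completes that route; your one-step reachability argument for condition \ref{thm13} (the evader lands on any prescribed node with probability $1/n$) is valid. The paper's proof is shorter and stays within its matrix formalism; yours is more informative and self-contained on well-posedness.
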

\begin{proof}
If $P_e=\frac{1}{n}\vectorones[n]\vectorones[n]^\top$, then by \eqref{eq:meetingtime} we have
  \begin{equation*}
      \fmt_{i,j}  =1 + \frac{1}{n}\sum_{k_1=1}^np_{i,k_1}^{(\textnormal{p})}\sum_{h_1 \neq k_1 } \fmt_{k_1,h_1}.
  \end{equation*}
Therefore, the meeting time $m_{i,j}$ does not depend on $j$. Let $\tilde{m}_{i}=m_{i,j}$, then we further have
  \begin{equation*}
      \tilde{\fmt}_{i}  =1 + \frac{n-1}{n}\sum_{k_1=1}^np_{i,k_1}^{(\textnormal{p})} \tilde{\fmt}_{k_1}.
  \end{equation*}
Since the mean meeting time satisfies $\MFMT(P_\textnormal{p},P_\textnormal{e})=\pi_p^\top M \pi_e=\pi_p^\top\tilde{m}$ in this case, we have
  \begin{equation*}
      \MFMT =1 + \frac{n-1}{n}\MFMT.
  \end{equation*}
Therefore we have $\MFMT=n$.
\end{proof}
We summarize the results for the complete graph in Table~\ref{tb:complete}.

\begin{table}
\centering
\begin{threeparttable}
\caption{Best Response for Pursuer in Complete Graph}\label{tb:complete}
\begin{tabular}{|c|c|c|}
\hline
\diagbox[width=10em]{Number\\ of nodes}{Evader strategy}&Being fast&RW/Being unpredictable \\ \hline
n=5   &stationary or $P_e^\top$ &\multirow{2}{*}{Arbitrary} \\ \cline{1-2}
n=6&stationary &\\ \hline
\end{tabular}
    \begin{tablenotes}
       \item When the evader is fast, similar results as in ring graphs carry over. When the evader is unpreditcable/slow, any strategy for the purser is optimal.
    \end{tablenotes}
  \end{threeparttable}
\end{table}


\emph{Results for the grid}: We plot the optimal strategies for the pursuer against an RW evader, an unpredictable evader, and a fast evader in Fig.~\ref{fig:grid_RW}, Fig.~\ref{fig:grid_entropy}, and Fig.~\ref{fig:grid_kemeny}, respectively. In all these figures, the size of the nodes indicates the magnitude of the stationary distribution, and the transparency of the edges indicates the magnitude of the transition probability.

From Fig.~\ref{fig:grid_RW}, Fig.~\ref{fig:grid_entropy}, we observe that when faced with an unpredictable and slow evader, the pursuer tends to travel fast in the graph. Note that the stationary distributions of the evaders and thus of the pursuers in these two cases are different (determined by the equal-neighbor model for RW and $\frac{1}{n}\vectorones[n]$ for the unpredictable chain).  Specifically, the center node in Fig.~\ref{fig:grid_RW} has a higher value in the stationary distribution than that in Fig.~\ref{fig:grid_entropy}. Qualitatively, this difference forces the neighbor nodes of the center node to have positive transition probabilities to the center in the solution of the optimal pursuer in Fig.~\ref{fig:grid_RW}, whereas it is not the case in Fig.~\ref{fig:grid_entropy}.

\begin{figure}[htbp]
\begin{center}
\subfigure[ RW evader chain]{
\includegraphics[scale=0.38]{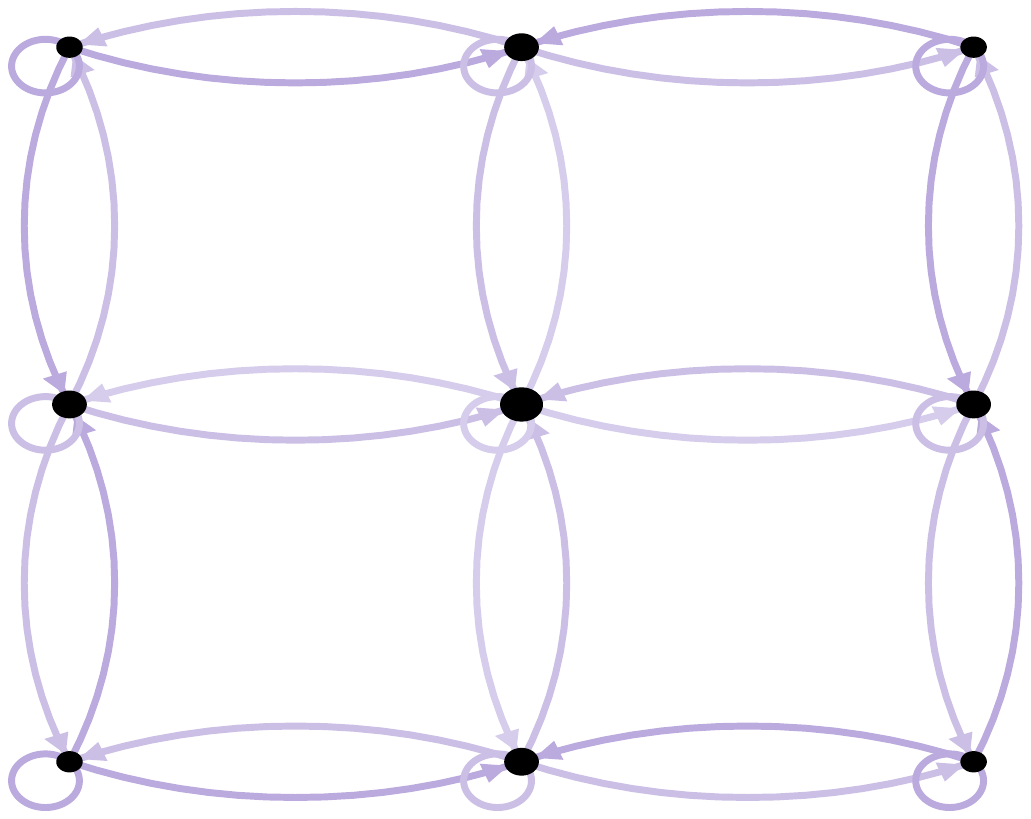}}
\subfigure[ Optimal pursuer chain]{
\includegraphics[scale=0.38]{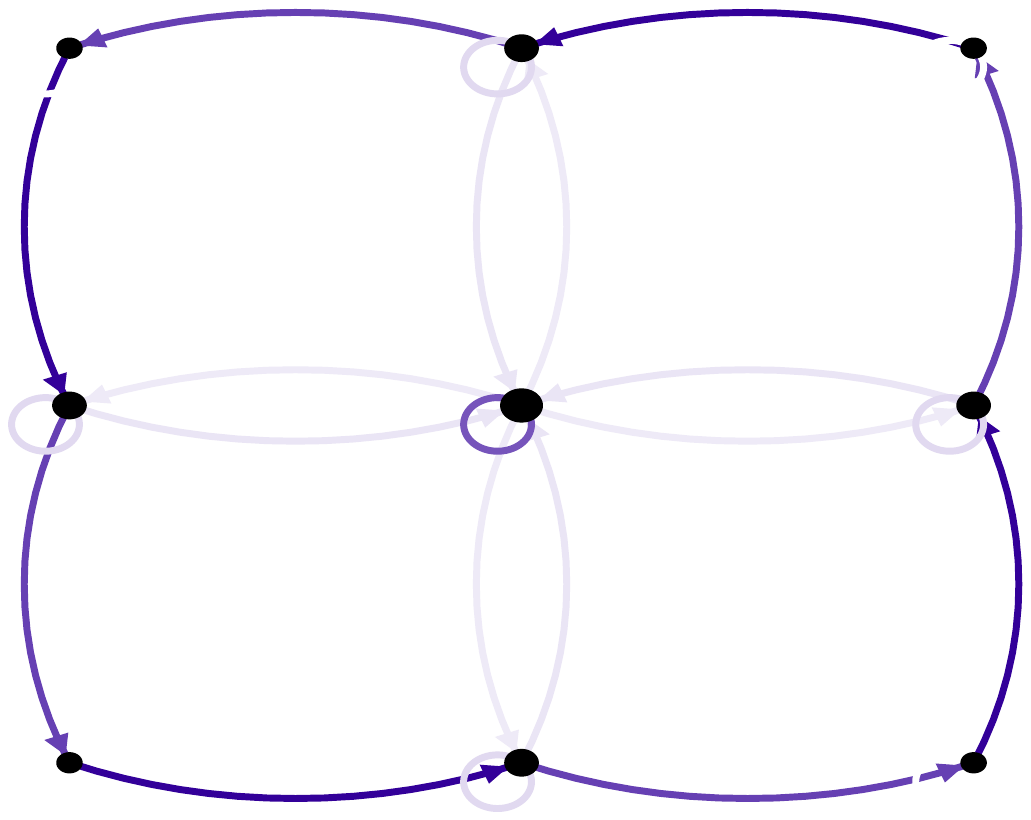}}
\caption{Random walk evader and optimal pursuer in grid}\label{fig:grid_RW}
\end{center}
\end{figure}

\begin{figure}[htbp]
\begin{center}
\subfigure[ Unpredictable evader chain]{
\includegraphics[scale=0.38]{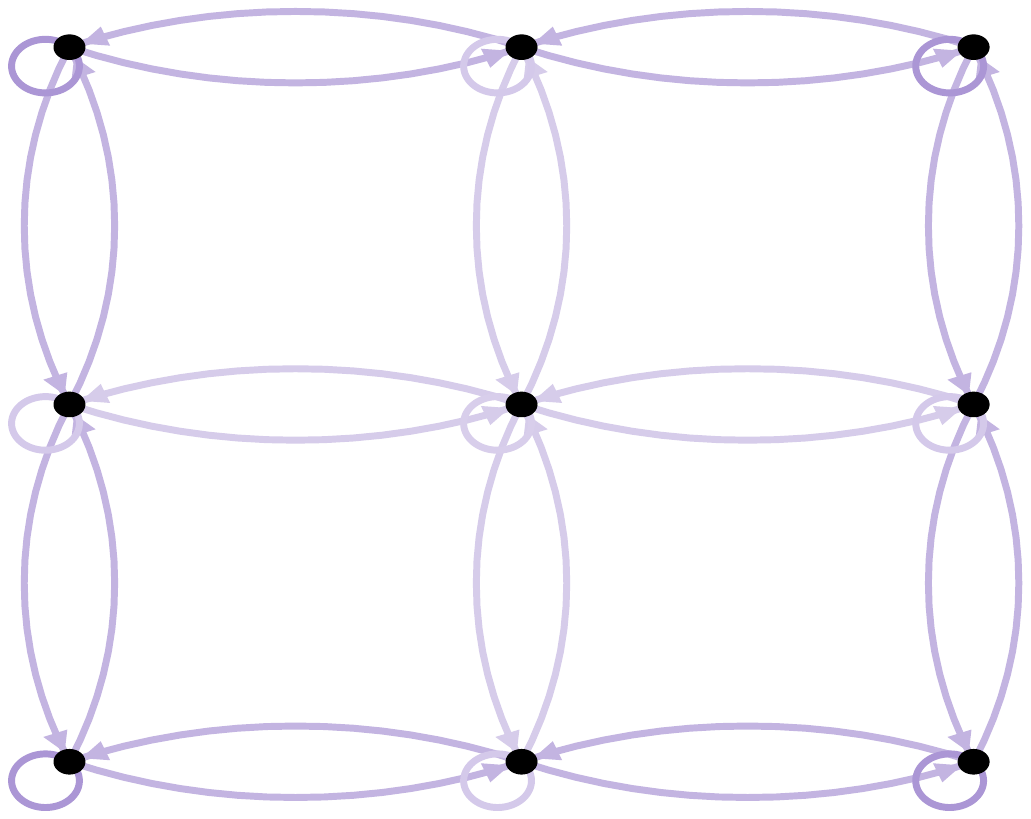}}
\subfigure[ Optimal pursuer chain]{
\includegraphics[scale=0.38]{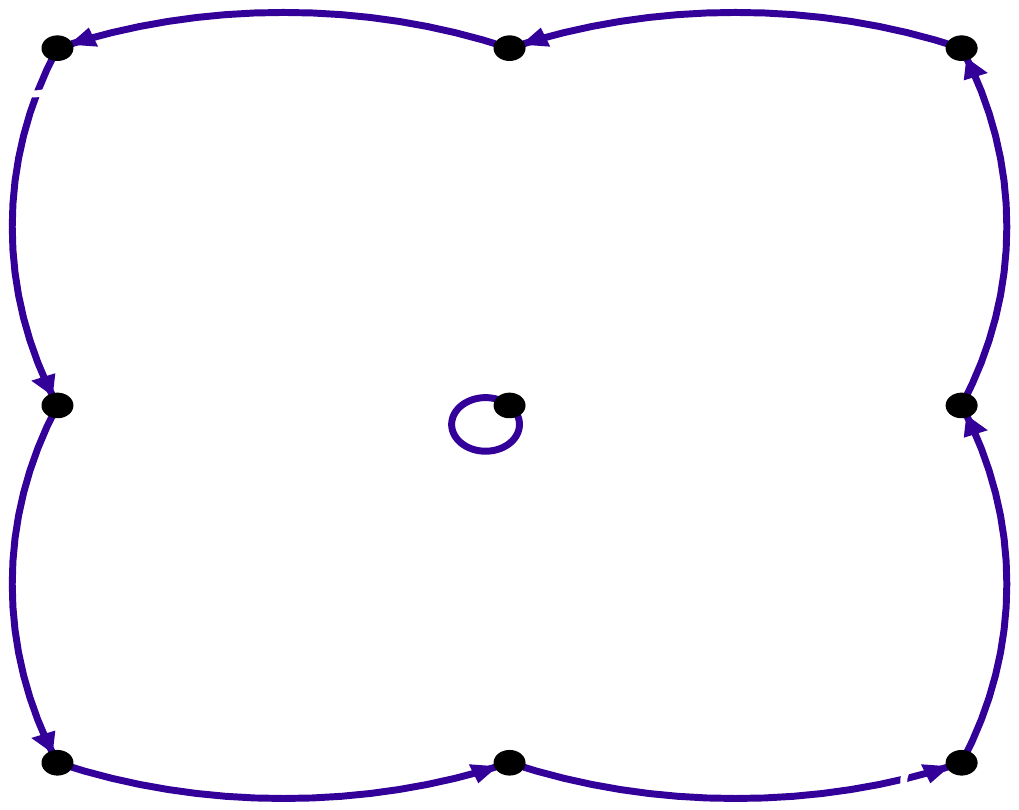}}
\caption{Unpredictable evader and optimal pursuer in grid}\label{fig:grid_entropy}
\end{center}
\end{figure}

In contrast, when the evader moves around fast enough, the optimal pursuer almost stays stationary and waits to be hit by the evader as shown in Fig.~\ref{fig:grid_kemeny}. The above observations in the grid graph are qualitatively consistent with those in the ring and complete graphs.

\begin{figure}[http]
\begin{center}
\subfigure[ Fast evader chain]{
\includegraphics[scale=0.38]{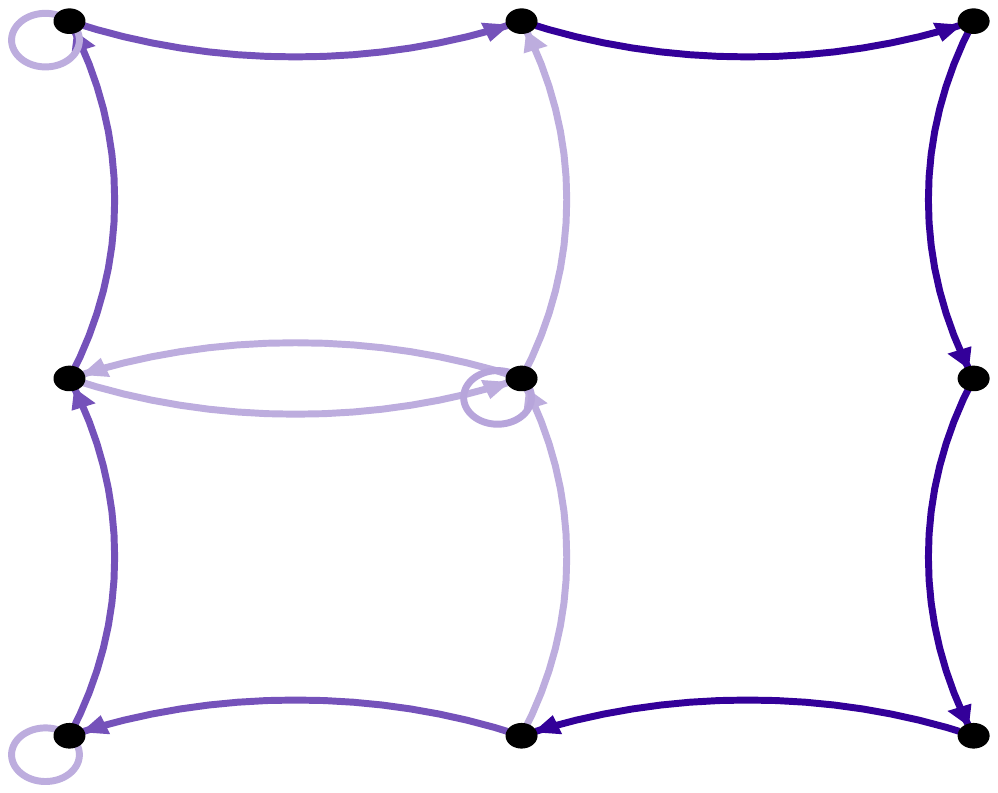}}
\subfigure[ Optimal pursuer chain]{
\includegraphics[scale=0.38]{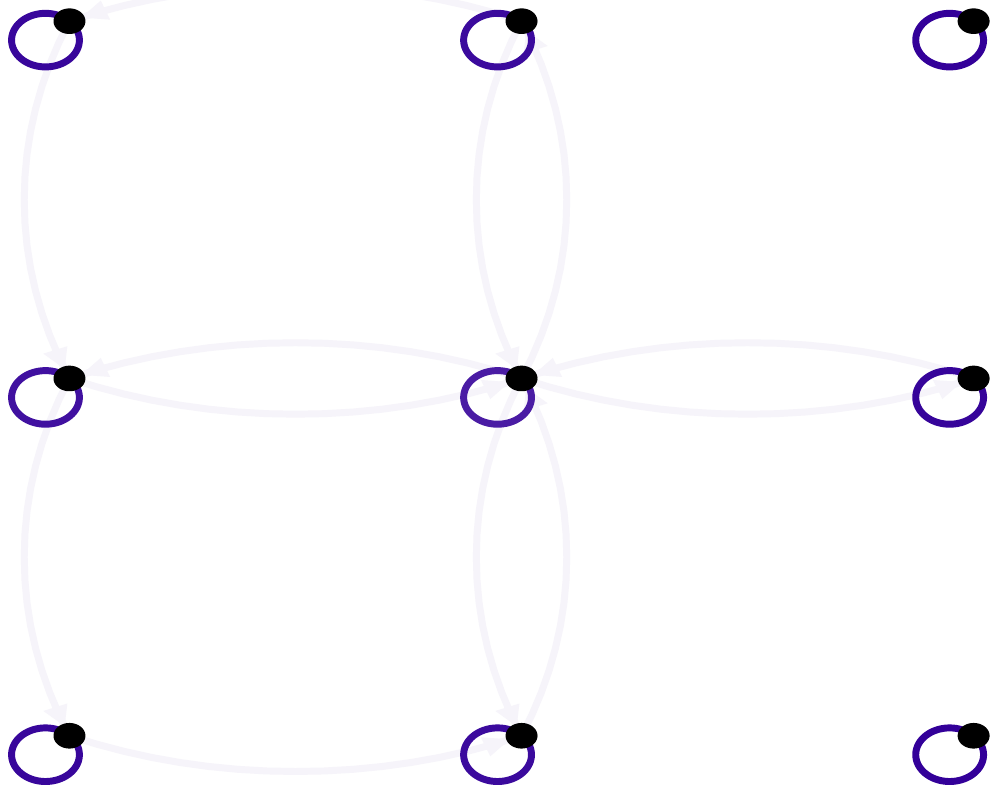}}
\caption{Fast evader and optimal pursuer in grid}\label{fig:grid_kemeny}
\end{center}
\end{figure}

\section{Conclusions}\label{sec:conclusion}
In this paper, we studied the expected meeting time of a single pursuer and a single evader moving on a graph according to discrete-time Markov chains. We presented novel closed-form expressions for the meeting times and  necessary and sufficient conditions for their finiteness. Then, we also discussed the connections with the hitting times of Markov chains. We finally formulated an optimization problem to obtain the optimal strategy for the pursuer faced with a mobile evader. Numerical examples were provided to explain the concepts and illustrate the results.

An interesting extension of the work discussed here would be to consider walkers moving with travel times similar to the cases studied in~\cite{XD-MG-FB:17o} and \cite{RP-PA-FB:14b}. 

\bibliographystyle{plainurl+isbn}
\bibliography{alias,Main,FB}
\end{document}